\newtheorem{definition}{Definition}
\newtheorem{theorem}{Theorem}
\newtheorem{proposition}[theorem]{Proposition}
\def\BibTeX{{\rm B\kern-.05em{\sc i\kern-.025em b}\kern-.08em
    T\kern-.1667em\lower.7ex\hbox{E}\kern-.125emX}}
\begin{document}

\title{Learning Fair Classifiers via Min-Max F-divergence Regularization\\
\thanks{This work was supported by NSF grants CCF 2100013,  CAREER 1651492, CNS 2209951 and CNS 2317192.}}

\author{\IEEEauthorblockA{Meiyu Zhong ~~ Ravi Tandon}
\IEEEauthorblockA{Department of Electrical and Computer Engineering \\
University of Arizona, Tucson, USA \\
E-mail: \textit{\{meiyuzhong, tandonr\}}@arizona.edu
}}

\maketitle

\begin{abstract} 
As machine learning (ML) based systems are adopted in domains such as law enforcement, criminal justice, finance, hiring and admissions, ensuring the fairness of ML aided decision-making is becoming increasingly important. 
In this paper, we focus on the problem of fair classification, and introduce a novel min-max F-divergence regularization framework for learning fair classification models while preserving high accuracy. 

Our framework consists of two trainable networks, namely, a classifier network and a bias/fairness estimator network, where the fairness is measured using the statistical notion of F-divergence. We show that F-divergence measures possess convexity and differentiability properties, and their variational representation make them widely applicable in practical gradient based training methods. The proposed framework can be readily adapted to multiple sensitive attributes and for high dimensional datasets.  
We study the F-divergence based training paradigm for two types of group fairness constraints, namely, demographic parity and equalized odds. We present a comprehensive set of experiments for several real-world data sets arising in multiple domains (including COMPAS, Law Admissions, Adult Income, and CelebA datasets). 

To quantify the fairness-accuracy tradeoff, we introduce the notion of fairness-accuracy receiver operating characteristic (FA-ROC) and a corresponding \textit{low-bias} FA-ROC, which we argue is an appropriate measure to evaluate different classifiers. 
In comparison to several existing approaches for learning fair classifiers (including pre-processing, post-processing and other regularization methods), we show that the proposed F-divergence based framework achieves state-of-the-art performance with respect to the trade-off between accuracy and fairness.  
\end{abstract}

\begin{IEEEkeywords}
Fair Machine Learning, Regularization. 
\end{IEEEkeywords}

\section{Introduction}

Machine learning based solutions are being increasingly deployed and adopted in various sectors of society, such as criminal justice, law enforcement, hiring and admissions. Despite their impressive predictive performance, there is a large body of recent evidence \cite{mehrabi2021survey,zemel2013learning, zafar2017fairness} which shows a flip side of using data driven solutions: bias in decision making, which is often attributed to the inherent bias present in training data. For instance, in criminal justice, risk assessment algorithms are often used to assess the risk of recidivism (reoffence), which is then used together with human input for decision making \cite{angwin_larson_mattu_kirchner_2016}. A classic example is that of Correctional Offender Management Profiling for Alternative Sanctions (COMPAS) algorithm \cite{angwin_larson_mattu_kirchner_2016} which measures the recidivism risk and is used by judges for pretrial detention and release decisions. It was shown in \cite{angwin_larson_mattu_kirchner_2016} that COMPAS often falsely predicts a higher risk for some racial groups (specifically, african americans) compared to others. Another prominent example of bias with respect to gender is when a job advertisement system tends to show less STEM related job advertisements to women \cite{lambrecht2019algorithmic}. With the proliferation of data driven algorithms, ensuring fairness becomes crucial in the process of designing ML based decision making systems.

\noindent\textit{Notions of Fairness.} There is a vast literature on the notions of fairness \cite{dwork2012fairness,zafar2017fairness1,NIPS2017_a486cd07} which mainly falls into three categories: (1) Group Fairness \cite{barocas2016big,feldman2015certifying,hardt2016equality} which requires that the subjects in the protected and unprotected groups have equal probability of being assigned to the positive predicted class. (2) Individual Fairness \cite{dwork2012fairness,NIPS2017_a486cd07,yurochkin2019training} which requires that \textit{similar individuals} (measured by a domain specific similarity metric) should be treated similarly. (3) Causality-based Fairness \cite{kilbertus2017avoiding,NIPS2017_a486cd07}: which requires that using causality-based tools to design fair algorithms.
In this paper, we consider group fairness, and focus on the notions of demographic parity (DP) and equalized odds (EO).  The techniques for achieving group fairness can be mainly divided into (1) Pre-processing methods \cite{zemel2013learning}, which reduce bias by processing the training data before being used for training; (2) In-processing methods \cite{zafar2017fairness,cho2020fair1}, which add fairness constraints via regularization in the training process. (3) Post-processing methods \cite{pleiss2017fairness}, which appropriately modify the model parameters post training. 

\noindent\textbf{Related Work on Learning Fair Classifiers.} The prominent in-processing method for learning fair classifiers is via regularization methods, where the key idea is to train a classifier which minimizes the classification error regularized by a bias penalty (which measures the discrepancy of the classifier across population sub groups). We remark that there have been several other approaches on regularization based training for fair classification, which include: a) using a covariance proxy \cite{zafar2017fairness} to measure the bias between predictions and sensitive attributes. Unfortunately, ensuring small correlation does not necessarily satisfy the stronger requirement of statistical independence.; b) kernel density estimation (KDE) based methods which first estimate conditional probability distribution of classifier predictions for each population group and use these as a fairness regularizer  \cite{cho2020fair}. KDE methods are appropriate when the data dimensionality is relatively small but are not scalable for high dimensional problems; c) another approach is to balance the TPR and FPR (true- and false-positive rates) across population sub-groups while training \cite{bechavod2017learning}; and d) measuring bias by the mean of Hirschfeld-Gebelein-R{\'e}nyi (HGR) Maximum Correlation Coefficient \cite{mary2019fairness} or mutual information \cite{cho2020fair1} between predictions and sensitive attributes.

Admittedly, there are numerous choices for adding fairness constraints, and this opens up the following key questions:  a) What is the optimal choice of a fairness regularization for a given notion of fairness, as well as the dataset and sensitive attributes? b) How does the  regularization procedure impact the resulting tradeoff between accuracy and bias? c) How should one design a flexible procedure for learning fair classifiers which can work for high-dimensional datasets and is compatible with gradient based optimization? 

\noindent \textbf{Main Contributions.} To deal with the above challenges, we propose a general min-max $F$-divergence regularization framework for learning fair classifiers. $F$-divergence, denoted by $D_{f}(P||Q)$ measures the difference between two probability distributions $P, Q$ and different choices of the function $f(\cdot)$ lead to well-known divergence measures, such as KL divergence, Hellinger distance and Total Variation (TV) distance. Specifically, we propose to measure the bias using  $F$-divergence between the classifier probability distributions across protected and unprotected groups. We next summarize the main contributions of this paper.

\begin{itemize}
\vspace{-1pt}
    \item By leveraging the variational representation of $F$-divergence, we cast the training process as a min-max optimization problem, which is suitable for commonly used gradient based optimization methods. The flexibility of the framework is two-fold: a) it can be readily applied in high-dimensional datasets, and b) by varying the choice of $f$, one can test and validate different proxies of achieving fairness within a single rubric.    
    \vspace{-1pt}
\item To quantify the fairness-accuracy tradeoff, we introduce the notion of fairness-accuracy receiver operating characteristic (FA-ROC) and also provide some interesting theoretical properties when using Total Variation distance 
as the measure of bias. Within this context, we also introduce the notion of \textit{low-bias} FA-ROC, which we argue is an appropriate measure to evaluate different classifiers. 
\vspace{-1pt}
\item We present a comprehensive set of results on multiple real world datasets (namely, COMPAS, Adult Census, Law School admissions and CelebA), and show the superiority of the proposed approach versus existing regularization, pre- and post-processing methods as discussed above. As an example, for the Adult census dataset, $F$-divergence regularization leads to $\approx 13\%$ increase in FA-AUC (area under the curve) compared to the state-of-the-art regularization, pre- and post-processing methods for Demographic parity (we achieve a gain of $\approx 10\%$ in FA-AUC for Equalized odds). For the high dimensional dataset (CelebA), our method consistently achieves better performance and receives a gain of $6\%$ w.r.t EO constraints in the Low-bias region.
\end{itemize}

\section{Preliminaries on Fair Classification}\label{Sec:pre}
We consider a supervised classification problem, where we are given a dataset of $N$ users:
    ${\{X_n,\;Y_n,\;Z_n\}}_{n=1}^N$,
where $X_n$ denotes the set of features of user $n$;  $Y_n$ represents the true label of user $n$; $Z_n$ denotes the set of sensitive attributes of user $n$, which depends on the dataset and underlying context. For instance, in predicting recidivism risk in criminal justice, $X_n$ includes features such as prior criminal history, demographic information, charge (type of crime); $Z_n$ represents sensitive attributes, for instance, race or gender\footnote{More generally, $Z_n$ can take values from a discrete set, i.e., $Z_{n}\in \mathcal{G}$ and our formulation allows for the number of sensitive groups, i,e,. $|\mathcal{G}|\geq 2$ to be any finite number.}; $Y_n$ denotes ground truth like whether a user will re-offend in two years. Our goal is to build a fair binary classifier\footnote{We note that while our discussion in the paper is for binary classification, the proposed framework can be readily adopted for multi-class settings, as we discuss later in this section.}, which yields the estimate of the probability of the true label defined as follows:  
\begin{align}
\pi(\hat{Y}|X) \triangleq 
\begin{cases}
\pi(0|x) = P(\hat{Y}=0 \vert X=x)\\
\pi(1|x) = P(\hat{Y}=1 \vert X=x).
\end{cases}
\end{align}
Note that we do not use sensitive attributes $Z$ as an input to the classifier.  However, it is well known \cite{NIPS2017_a486cd07} that excluding the sensitive attributes alone does not necessarily lead to a fair classifier due to possible correlation between the sensitives attributes and features. For the scope of this paper, we focus on two statistical notions of group fairness: demographic parity (DP) and equalized odds (EO). Before we present group fairness notions, we first introduce the definition of F-divergence:
\begin{definition}(F-divergence)
Let function $f:{\mathbb{R}}_+\rightarrow\mathbb{R}$ be a convex, lower-semicontinuous function satisfying $ f(1) = 0 $. Given two probability distributions P and Q on a measurable space$(\mathcal X,\mathcal F)$, F-divergence is defined as: 
\begin{equation} \label{eq1}
    D_f(P\parallel Q)=E_Q\left[f\left(\frac{dP}{dQ}\right)\right] \nonumber
\end{equation}
\end{definition}
For instance, when $f(x)=x\log(x)$, this reduces to the KL divergence; $f(x)=(x-1)^2$ corresponds to $\chi^2$ divergence; $f(x)=(1-\sqrt x)^2$ corresponds to Squared Hellinger (SH) distance.
Next, we now define the notion of Demographic Parity (DP) as follows:
\begin{definition}(Demographic Parity) A binary classifier $\pi$ satisfies Demographic Parity (DP) if its prediction $\hat Y$ is independent of the sensitive attribute $Z$:
\begin{align}
\pi(1|Z=i)=\pi(1|Z=j)), ~~\forall i\neq j. \nonumber 
\end{align}
\end{definition}
Following from previous works \cite{cho2020fair,pleiss2017fairness,zafar2017fairness1}, the standard measurement of DP is the difference between the conditional output probability distribution of the classifier given sensitive group $i$ and $j$:
\begin{align}
\label{def:deltaDP}
    \Delta_{DP} :=  \sum_{i\neq j}|\pi(1 | Z=i) - \pi(1 | Z=j)|. 
\end{align}
\begin{figure*}[t]
\centering
\includegraphics[scale=0.2]{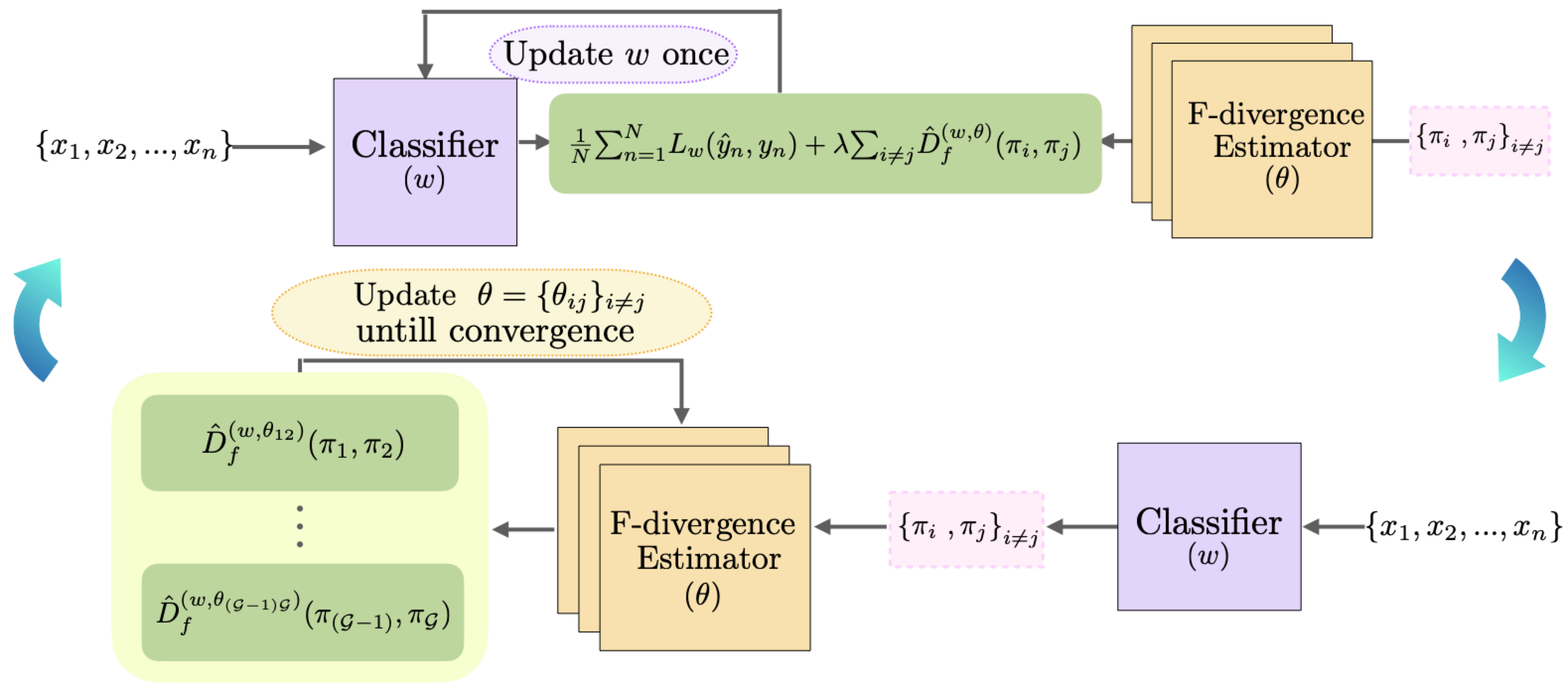}
\caption{Schematic of the Min-Max $F$-divergence regularization framework for fair training. The classifier (parameterized by $w$) is trained to minimize the regularized objective (containing classification loss $+$ F-divergence regularization term). The estimator (parameterized by $\theta$) estimates the F-divergence between distribution $\pi_i$ and distribution $\pi_j$, measuring the bias in classification across groups $(i,j)$. For DP, the distribution $\pi_i$ for a group $i$ is given by $\pi_i \sim \pi(\hat Y\left| Z=i\right.)$, whereas, for EO, $\pi_i \sim \pi(\hat Y\left| Z=i,Y=1\right)$. The two networks are trained alternatively, where for a fixed classifier, the $F$-divergence estimation is performed using a maximization problem leveraging the variational representation of $F$-divergence.}
\label{systemmodel}
\end{figure*}
Note that our notion of $\Delta_{DP}$ is the same as Total Variation distance. For the special case when $\Delta_{DP}=0$, this reduces to the notion of \textit{perfect} demographic parity \cite{barocas2016big}. We note that for the case of multi-class classification, the above notion generalizes by considering $\sum_{\hat y} \sum_{i\neq j} |\pi(\hat{Y} = \hat y|Z=i)-\pi(\hat{Y} = \hat y|Z=j)|$.
In a similar manner, we can define the notion of equalized odds as follows:
\begin{definition}(Equalized Odds)
A binary classifier $\pi$ satisfies Equalized Odds (EO) if its prediction $\hat Y$ is conditionally independent of its sensitive attribute $Z$ given the label $Y$.
\begin{align}
\pi(1|Z=i, Y=y) = \pi(1|Z=j, Y=y)), \nonumber 
\end{align}
where $\forall i \neq j~\text{and}~y \in \{0,1\}. $
Same as above, we define the standard measurement of EO as follows:
\begin{align}
    \hspace{-3pt}\Delta_{EO}\hspace{-3pt} :=\hspace{-5pt} \sum_{y}\sum_{i\neq j}|\pi(1 | Z=i, Y=y)\hspace{-2.5pt} -\hspace{-2pt} \pi( 1 | Z=j, Y=y)|.\label{def:deltaEO}
\end{align}
\end{definition}

\noindent \textbf{Motivation for $F$-divergence based Regularization}-- One prominent approach for learning fair classifiers is that of \textit{fairness regularization}, i.e., adding a penalty term in the training loss function, which acts as a proxy to capture the fairness constraints (either DP or EO). In this work, we propose to use $F$-Divergence (as defined above) as the fairness regularization term in the loss function. $F$-divergence family has natural benefits like convexity and differentiability, which makes it an ideal candidate for gradient based optimization algorithms. 
In addition, as we discussed in the introduction, compared to other approaches, such as correlation/covariance between sensitive attributes and classifier outputs, the $F$-divergence notions are stronger notions to capture dependence and provide stronger fairness guarantees. Furthermore, $F$-divergence notions often also have an operational interpretation;  for instance, the Kullback-Leibler (KL) and Chernoff divergences control the decay rates of error probabilities \cite{nguyen2010estimating}. As a member of the $F$-divergence family, prior work which uses mutual information \cite{cho2020fair1} between classifier predictions and sensitive attributes is therefore a special case of our framework.

\section{$F$-divergence Regularized Fair Training}\label{Sec:f-div}

We consider a classifier described by trainable parameters $w$, where $L_w(\hat{y}_n;y_n)$ is the loss function\footnote{For our experiments, we use binary cross-entropy loss function for training.} between the output of the classifier ($w$) and the ground truth of user $n$; and the fairness constrained learning can be formulated as the following optimization problem (denoted by \textbf{OPT}):
\begin{align}\label{fair_regu}
    &\underset w{min\;}\frac{1}{N}\sum_{n=1}^N L_w(\hat{y}_n;y_n)\; + \lambda \sum_{ i\neq j} D_f^{(w)}(\pi_i \parallel \pi_j),
\end{align}
where $D_f^{(w)}(\pi_i \parallel \pi_j)$ is the $F$-divergence between group $i$ and group $j$ for the classifier parameterized by $w$;  $\lambda$ is a hyperparameter that can be tuned to balance the trade-off between accuracy and fairness. Solving \textbf{OPT} requires the estimation of $F$-divergence for a classifier. To this end, we propose three $F$-divergence estimators to compare their performances on fair classification problem. We first leverage the variational representation of $F$-divergence which allows us to estimate F-divergence in an efficient manner. We show a schematic of the $F$-divergence based framework in Fig \ref{systemmodel}.

\subsection{Variational Representation of $F$-divergences (NN)}
It is well known \cite{nguyen2010estimating} that $F$-divergence between two distributions admits a variational representation, given as
\begin{align} 
    \hspace{-7pt}D_f(P\parallel Q) =\underset{T(\cdot)}{sup} ~E_{X\sim P}\left[T(X)\right] - E_{X\sim Q}\left[f^{*}(T(X))\right],\label{dp_f_div_variational}
\end{align}
where the function $f^\ast(t)=\underset{x\in dom_f}{sup}\{xt-f(x)\}$ denotes the convex conjugate (also known as the Fenchel conjugate) of the function $f$. The above variational representation involves a supremum over all possible functions $T(\cdot)$. We can obtain an estimate for $F$-divergence by replacing the supremum over a restricted class of functions. Specifically, if we use a parametric model $T_{\theta}$, (e.g., a neural network) with parameters $\theta$, then taking the supremum over the parameters $\theta$ yields a lower bound on $F$-divergence in \eqref{dp_f_div_variational} as stated next: 
\begin{align} 
    \hspace{-10pt}D_f(P\parallel Q) \hspace{-3pt}
\geq  \underset{\theta}{sup} \hspace{1pt}E_{X\sim P}\left[T_{\theta}(X)\right] - E_{X\sim Q}\left[f^{*}(T_{\theta}(X))\right].\label{dp_lower_bound} 
\end{align}
We use the above variational lower bound to estimate the $F$-divergence for fair classification as described next. 
For enforcing fairness constraints (DP/EO), we need to estimate $F$-divergence between joint distributions in different groups, i.e., 
$D_f(\pi_i\parallel \pi_j)~~ \forall ~ i\neq j, ~~ i, j \in \mathcal{G}$
where for Demographic parity (DP), $\pi_i \sim \pi(\hat Y\left| Z=i\right.)$, and for Equalized Odds (EO),  $\pi_i \sim \pi(\hat Y\left| Z=i, Y=1\right)$. The variational lower bound on $F$-divergence in \eqref{dp_lower_bound} can then be estimated as:
\begin{align}\label{dp_div}
    \underset{\theta}{\max} \frac{1}{M}\left(\sum_{m=1}^{M}\hspace{-1pt}T_\theta\left(x_{i}^{(m)}\right) - \sum_{m=1}^{M}\hspace{-1pt}f^\ast\left(T_\theta\left(x_{j}^{(m)})\right)\right)\right), 
\end{align}
where in \eqref{dp_div}, we have replaced the expectation operators with their empirical estimates, and $\{x_{i}^{(m)}\}$ denote i.i.d. samples drawn from the distribution $\pi_{i}$. Together with the above estimate, from \textbf{OPT} we arrive at the following min-max optimization problem (denoted by \textbf{MIN-MAX-OPT}):
\begin{align}
    &\underset w{\min\;}\underset\theta{\max\;}\frac{1}N\sum_{n=1}^N \mathcal L_w(\hat{y}_n;y_n)~\\
    &+\;\lambda\sum_{ i\neq j}\underbrace{\left(\frac{1}{M}\sum_{m=1}^{M}\left(T_\theta\left(x_i^{(m)}\right) - f^\ast\left(T_\theta\left(x_j^{(m)}\right)\right)\right)\right)}_{\triangleq \hat{D}^{(w,\theta)}_{f}(\pi_{i},\pi_{j})} \nonumber 
\end{align}
\noindent \textbf{MIN-MAX-OPT} can be solved by alternatively training the classifier $w$ and $F$-divergence estimator $\theta$. Specifically, we update the classifier weight $w$ while fixing the $F$-divergence estimator parameter $\theta^*$, and then we update $F$-divergence estimator parameter $\theta$ fixing the classifier weight $w^*$. The $F$-divergence estimator updates  each time as the classifier changes. We show the training details in Algorithm \ref{algo:f-div}. 

\begin{algorithm}[t]
\caption{$F$-divergence based Fair Training}\label{algo:f-div}
\begin{algorithmic}[1]
 \STATE \textbf{Input:} Sample $ \{x_{i}^{(1)},...,x_{i}^{(m)}\}$ from distribution $\pi_i$ and Sample $\{x_{j}^{(1)},...,x_{j}^{(m)}\}$ from distribution $ \pi_j$; 
 
\hspace{10mm}base classifier $w_t$; $F$-divergence estimator $\hat D_{f}$. 
    
 \STATE \textbf{Output:} Fair classifier ($w^*$) and corresponding $F$-divergence estimator ($\hat D_{f}^{*}$)

    \FOR{training iterations $(t=1, 2, \ldots, T_{1})$}{
	    \STATE \textbf{Divergence estimation for a fixed classifier $w_t$:} 
     Update divergence estimator $\theta$ for each pair  ($i,j$) of groups for $T_2$ steps (or until convergence) to maximize
	    \begin{center}
	    $
	    \hat{D}^{(w_t,\theta)}_{f}(\pi_{i},\pi_{j})
	    $ 
        \end{center}
        
	    \STATE \textbf{Classifier update:} Update classifier $w_t$ to minimize the regularized loss: 
	    \begin{center}
	    $
	    \frac{1}N\sum_{n=1}^N\mathcal L_{w_t}(\hat{y}_n;y_n)+\lambda\sum_{i \neq j}\hat{D}^{(w_{t},\theta_t)}_{f}(\pi _{i},\pi_{j})
	    $
	    \end{center}   
	}\ENDFOR

For demographic parity (DP): $\pi_i \sim \pi(\hat Y\left|Z_{i}\right.)$;
 for equalized odds (EO): $\pi_i \sim \pi(\hat Y\left|Z_{i}, Y=1\right)$.
\end{algorithmic}
\end{algorithm}

\begin{figure*}[t]
\centering
\includegraphics[scale=0.45]{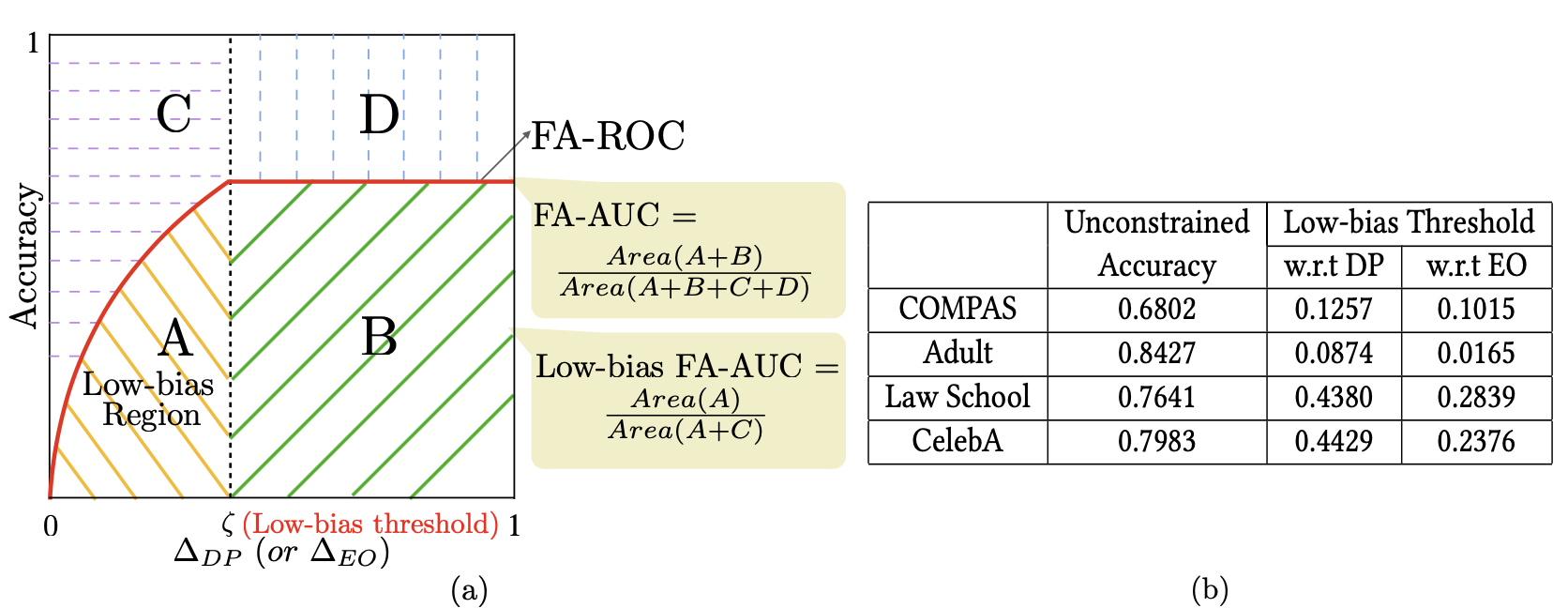}
\caption{(a) Quantification of Fairness-Accuracy (FA) tradeoffs via notions of FA-AUC and Low-Bias FA-AUC. The red curve (areas $A$ and $B$) represents the convex hull of the achievable fairness-accuracy pairs, i.e., $(\epsilon, a)$ obtained by varying the fairness regularization parameter. The \textit{low-bias} FA-AUC measures the AUC when the bias is less than a prescribed threshold, $\Delta_{DP}\leq \zeta$. 
We pick this low-bias threshold as the bias in the classifier when it is optimized without any fairness constraint. (b) Low-bias Threshold(s) w.r.t DP/EO and unconstrained test accuracy for four real world datasets. }
\label{Fig:auc}
\vspace{-13pt}
\end{figure*}
\section{Evaluation Metrics, Results and Discussion}\label{sec:experiment}
In this Section, we first discuss evaluation metrics (Section \ref{sec:evaluation}) to evaluate the tradeoffs between fairness and predictive test accuracy. Speficically, we propose the fairness-accuracy receiver operating characteristic (FA-ROC) and a corresponding low-bias version of FA-ROC as a quantitative measure of this tradeoff. We next present the details of our experimental setup in Section \ref{sec:exp_setup}, specifically, the datasets, details of classifier architectures and $F$-divergence estimators, training hyperparameters, as well as other existing approaches for learning fair classifiers which we compare against the proposed framework. Finally, in Section \ref{sec:exp_results},  we present and discuss the results and show the improved performance of the proposed $F$-divergence against various other existing approaches, including regularization, pre- and post-processing methods  for four real world datasets (COMPAS dataset \cite{angwin_larson_mattu_kirchner_2016} for recidivism prediction, Adult Census dataset \cite{dua_graff_2017} for income level prediction, Law School admissions dataset \cite{wightman_1998} for law students' scores prediction, CelebA dataset \cite{liu2015faceattributes} for facial image classification) and one synthetic dataset (Moon dataset \cite{pedregosa2011scikit}), which are widely used for the assessment of fairness in classification problems. The comprehensive additional experiments are provided in the Appendix (including multiple sensitive attributes, groupwise tradeoff(s) between fairness and accuracy, and comparison of different F-divergence estimators).
\vspace{-7pt}
\subsection{Fairness-Accuracy Tradeoff \& Evaluation Metrics} \label{sec:evaluation} 
In addition to imposing approximate fairness constraints, one is simultaneously interested in learning classifiers with high predictive accuracy. To this end, we next define the notion of fairness-vs-accuracy receiver operating characteristic, i.e., FA-ROC. For a classifier $\pi(\hat{Y}|X)$, we denote it's accuracy as $\text{Acc}(\pi)= \text{Pr}(\hat{Y} = Y)$.  

\begin{definition} A fairness-accuracy pair ($\epsilon, a$) w.r.t. demographic parity is achievable if there exists a classifier $\pi(\cdot)$ with $\text{Acc}(\pi)\geq a$ and $\Delta_{\text{DP}}(\pi)\leq \epsilon$. Similarly, one can define the achievability of $(\epsilon, a)$ w.r.t. equalized odds.  
\end{definition}
We next state a Proposition which reveals an interesting property of the achievable fairness-accuracy tuples. 
\begin{proposition}\label{Lemma: delta_acc}
Suppose  that the fairness-accuracy tuples $(\epsilon_1,a_1),(\epsilon_2,a_2),\ldots, (\epsilon_k,a_k)$ are achievable. Then any convex combination $(\sum_{i=1}^{k}\beta_i \epsilon_i, \sum_{i=1}^{k}\beta_i a_i)$, for $\beta_i \in [0,1], \sum_{i}\beta_i=1$ of these pairs is also achievable. Consequently, the convex hull of these pairs is also achievable.
\end{proposition}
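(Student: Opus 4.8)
The plan is to prove achievability of an arbitrary convex combination by an explicit \emph{time-sharing} (randomized mixture) construction, and then observe that the convex hull statement follows immediately. Let $\pi^{(1)}, \ldots, \pi^{(k)}$ be classifiers witnessing achievability of $(\epsilon_1,a_1),\ldots,(\epsilon_k,a_k)$, so that $\text{Acc}(\pi^{(l)}) \geq a_l$ and $\Delta_{\text{DP}}(\pi^{(l)}) \leq \epsilon_l$ for each $l$. Given weights $\beta_l \in [0,1]$ with $\sum_l \beta_l = 1$, I would define a mixed classifier $\tilde{\pi}$ by first drawing an index $I$ with $\Pr(I=l)=\beta_l$, \emph{independently} of $(X,Y,Z)$, and then predicting according to $\pi^{(I)}$. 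Equivalently, the mixture acts pointwise as $\tilde{\pi}(\hat{y}\mid x) = \sum_{l}\beta_l\,\pi^{(l)}(\hat{y}\mid x)$, and the goal is to show that $\tilde{\pi}$ witnesses the pair $\big(\sum_l \beta_l \epsilon_l,\ \sum_l \beta_l a_l\big)$.

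The two required verifications are then short. First, accuracy is \emph{exactly linear} in the prediction: writing $\text{Acc}(\pi) = E_{X,Y}[\pi(Y\mid X)]$, linearity of expectation gives $\text{Acc}(\tilde{\pi}) = \sum_l \beta_l\,\text{Acc}(\pi^{(l)}) \geq \sum_l \beta_l a_l$, which recovers the accuracy coordinate of the target. Second, the bias is \emph{subadditive}: the group-conditional prediction $\tilde{\pi}(1\mid Z=i)=E_{X\mid Z=i}[\tilde{\pi}(1\mid X)]$ is again linear in the mixture, so $\tilde{\pi}(1\mid Z=i)=\sum_l \beta_l\,\pi^{(l)}(1\mid Z=i)$. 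Applying the triangle inequality to each term $|\tilde{\pi}(1\mid Z=i)-\tilde{\pi}(1\mid Z=j)|$ (using $\beta_l\geq 0$) and summing over pairs $i\neq j$ yields
\[
\Delta_{\text{DP}}(\tilde{\pi}) \;\leq\; \sum_{l}\beta_l\,\Delta_{\text{DP}}(\pi^{(l)}) \;\leq\; \sum_l \beta_l \epsilon_l .
\]
Hence $\tilde{\pi}$ achieves $\big(\sum_l \beta_l \epsilon_l,\ \sum_l \beta_l a_l\big)$. The identical argument handles equalized odds after replacing $\pi(1\mid Z=i)$ by the label-conditional quantities $\pi(1\mid Z=i,Y=y)$ and summing over $y$. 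The final claim about the convex hull is then immediate, since the convex hull of the given points is \emph{by definition} the set of all such convex combinations, each of which has just been shown achievable.

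I do not expect a serious obstacle here; the only subtlety worth flagging is the \textbf{direction of the inequalities}. Because achievability is defined with $\text{Acc}\geq a$ and $\Delta_{\text{DP}}\leq \epsilon$, the exact linearity of accuracy and the mere \emph{subadditivity} (rather than linearity) of the bias both point the favorable way, so the slack introduced by the triangle inequality in the bias term is harmless. The one modeling point I would be careful to state explicitly is that the mixing index $I$ is drawn independently of $(X,Y,Z)$, so that the conditional probabilities decompose cleanly and accuracy factors as a convex combination; this independence is what licenses the pointwise identity $\tilde{\pi}(\hat{y}\mid x)=\sum_l \beta_l\,\pi^{(l)}(\hat{y}\mid x)$ used throughout.
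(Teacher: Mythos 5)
Your proposal is correct and follows essentially the same route as the paper: both construct the randomized mixture classifier that selects $\pi^{(l)}$ with probability $\beta_l$, use linearity of expectation for the accuracy coordinate, and apply the triangle inequality (the paper phrases it as convexity of the absolute difference plus Jensen) to bound the bias coordinate. Your write-up is if anything slightly more careful than the paper's, since you explicitly state the independence of the mixing index from $(X,Y,Z)$, handle the sum over all group pairs, and note the extension to equalized odds.
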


\begin{proof}
Suppose that the fairness-accuracy pairs $(\epsilon_i, a_i), i=1,\ldots, k$ are achievable, i.e., there exists  classifiers $\pi_i, i =1,\ldots, k$ for which  $a_i = \pi_i(\hat Y = Y)$ and 
\begin{align}
    \epsilon_i =\Delta_{DP}(p_i,q_i) =|p_{i}-q_{i} |=|\pi_i(\hat Y | Z=0)-\pi_i(\hat Y | Z = 1)|\nonumber
\end{align}
We need to show that any convex combination of these tuples, i.e.,  $(\epsilon, a) = (\sum_{i}\beta_i \epsilon_i, \sum_{i}\beta_i a_i)$ is achievable.   To this end, we can construct a new classifier as follows: $\pi = \pi_i$ with prob. $\beta_i$, $i=1,2,\dots, k$, for $\beta_i \in [0,1], \sum_{i}\beta_i=1$.
We now prove that the above classifier achieves the desired $(\epsilon, a)$. 
The accuracy of $\pi$ can be lower bounded as 
\begin{align}
    \pi(\hat{Y} = Y) 
    &= \sum_{i=1}^{k} \beta_i \pi_i(\hat Y = Y) \geq \sum_{i=1}^{k}\beta_i a_i \triangleq a.
\end{align}
We next show the bound on the fairness constraint for the classifier $\pi$, where $\Delta_{DP}(p,q)$ can be upper bounded by: 
\begin{align} \label{eq:convexityf}
     \left|\sum_{i}\beta_i p_i -  \sum_{i}\beta_i q_i\right| \overset{(a)}{\leq} \sum_{i=1}^{k}\beta_i |p_i - q_i| \leq \sum_{i=1}^{k}\beta_i \epsilon_i \triangleq \epsilon \nonumber 
\end{align}
where (a) follows from the fact that the norm $\Delta_{DP}(p,q)$ is convex in the pair $(p,q)$ followed by Jensen's inequality. 
This completes the proof of Proposition \ref{Lemma: delta_acc}. 
\end{proof}

\begin{table*}[t]
\begin{center}
\scalebox{0.81}{
\begin{tabular}{|c|c|c|c|c|c|}
\hline
 Datasets &Moon & COMPAS &  Adult&  Law School & CelebA \\
\hline
 Learning Rate & 2e-3 / 2e-3  & 6e-4 / 6e-4 & 1e-2 / 1e-2  & 1e-4 / 1e-4 & 1e-3 / 1e-3 \\
\hline
Batch Size& 2048 / 2048   & 2048 / 2048  & 2048 / 2048 & 2048 / 2048& 256/256 \\
\hline
 Range of $\lambda$ ~~(\text{Regularization parameter}) & 0-9 / 0-9 &  0-9 / 0-9  & 0-9 / 0-9 & 0-9 / 0-9& 0-500 / 0-500  \\
\hline
Number of Epochs & 200 / 200  & 200 / 200 & 200 / 200 & 200 / 200 & 10/10 \\
\hline
Number of seeds & 5 / 5 & 5 / 5 & 5 / 5 & 5 / 5 & 5/5\\
\hline
Optimizer & Adam / Adam &  Adam / Adam  &  Adam / Adam  & Adam / Adam &  Adam / Adam \\
 \hline
Original (unconstrained) Accuracy & 0.9728 / 0.9728 &  0.6802 / 0.6802 &  0.8427 / 0.8427   & 0.7641 / 0.7641 &  0.7983 / 0.7983\\
 \hline
 Original Bias (i.e., Low-bias Threshold) & 0.2706 / 0.0105   &  0.1257 / 0.1015  &  0.0874 / 0.0165   & 0.4380 / 0.2839 &  0.4429 / 0.2376 \\
 \hline
Threshold for classifier prediction & 0.5 / 0.5 &  0.5 / 0.5  &  0.5 / 0.5  & 0.5 / 0.5 & 0.5 / 0.5 \\
 \hline
 Steps of F-divergence estimator (per classifier update) & 100 / 100   &  100 / 100  &  10 / 1  & 100 / 100 & 3/3\\
 \hline
\end{tabular}}
\caption{Hyperparameters for training process. Each entry represents the hyperparameter w.r.t DP / EO.}
\label{Table: hyperparameters}
\end{center}
\vspace{-15pt}
\end{table*}
The above Proposition uses the convexity property of $\Delta_{DP}(p,q)$. An important consequence of this Proposition is the following: as the fairness constraint (quantified by $\epsilon$) is varied, one achieves different (fairness, accuracy) operating points. The above Proposition shows that the convex hull of these pairs is also achievable (essentially, in our proof we show that to achieve the convex combination of the original tuples, one can construct a new classifier by combining these classifiers). 
We now define the fairness-accuracy receiver operating characteristic (FA-ROC) with respect to demographic parity (resp. equalized odds) as follows:
\begin{align}
\text{FA-ROC}_{\text{DP}}&=\{(\epsilon, a): (\epsilon,a)  \text{ is achievable w.r.t. DP}\}. \nonumber\\
\text{FA-ROC}_{\text{EO}}&=\{(\epsilon,a): (\epsilon,a)  \text{ is achievable w.r.t. EO}\}. \nonumber
\end{align}
The performance of different regularization techniques can be compared by calculating the area under FA-ROC curve (denoted by FA-AUC) as shown in Fig \ref{Fig:auc}(a).  

\noindent \textbf{Low-bias FA-ROC}-- One shortcoming of the FA-ROC is that it will \textbf{mask} the performance of the fair classifier when the original bias of a classifier is small as we explain next. Suppose that we do not impose any fairness constraint, and let $\zeta$ denote the bias of the resulting unconstrained classifier. Then, the \textit{low-bias} FA-ROC consists of all achievable $(\epsilon, a)$ pairs such that $\epsilon\leq \zeta$. Intuitively, since $\zeta$ is the natural bias one would obtain when not imposing any fairness penalty, the non-trivial portion of the trade-off is the one corresponding to $\epsilon \leq \zeta$.  Therefore, we introduce the notion of \textit{low-bias FA-ROC}, and argue that this is a more intuitive and justifiable measure of comparing the performance of different regularization techniques. The low-bias region and corresponding AUC are shown in Fig. \ref{Fig:auc}(a). In Fig. \ref{Fig:auc}(b), we also show the low-bias threshold(s) ($\zeta$) as well as the unconstrained classification accuracy for the four real-world datasets used in our experiments (COMPAS, Adult Income, Law School admissions and CelebA datasets). 
\vspace{-15pt}
\subsection{Experimental Setup}\label{sec:exp_setup} 
\vspace{-1pt}
\subsubsection{Datasets} We consider four real-world datasets and one synthetic dataset in our experiments as described next:  a) \textit{\underline{COMPAS Dataset}}: This dataset consists of data from $N=7,214$ users ($N_{train} = 5,049$, $N_{test} = 2,165$), with 10 features (including age, prior criminal history, charge degree etc.) which are used for predicting the risk of recidivism in the next two years. 
 b) \textit{\underline{Adult Census Dataset (Adult dataset)}}: This dataset includes income related data with 14 features (i.e., age, work class, occupation, education etc.) of $N=45,222$ users ($N_{train} = 32,561$, $N_{test} = 12,661$) to predict whether the income of a person exceeds a threshold (e.g., \$50k) in a year.  c) \textit{\underline{Law School Admissions Dataset (Law School dataset)}}: This dataset includes the admission related data with 7 features (LSAT score, gender, undergraduate GPA etc.) of $N=4,862$ applicants ($N_{train} = 3 ,403$, $N_{test} = 1,459$) to predict the likelihood of passing the bar.  
 d) \textit{\underline{CelebA dataset}}: This high-dimensional image dataset contains $N= 202,599$ ($N_{train} = 162,770$, $N_{validation} = 19,867$, $N_{test} = 19,962$) face images of celebrities with 40 binary attributes, which are cropped and resized to 64 $\times$ 64 pixels images. We use \textit{attractive/not-attractive} as the binary classification label and \textit{gender} as the sensitive attribute. The original CelebA contains training, validation and testing data. We use the CelebA testing data to report test accuracy and fairness measurements. e) \textit{\underline{Moon Dataset}}: This synthetic dataset contains $N=15,000$ examples ($N_{train} = 10,000$, $N_{test} = 5,000$) with two features and one sensitive attribute.

For all the above datasets, our goal will be to build a fair binary classifier for two scenarios:  a) $|Z|=2$, when the sensitive attribute is race (COMPAS/Law School/Adult datasets), i.e., $Z \in \{C, O\}$, where $C=\text{``Caucasian"}$ or $O=\text{``Other race"}$, corresponding to two groups; or when the sensitive attribute is gender (CelebA dataset). We also study another scenario, when $|Z|=4$, when the sensitive attribute(s) are both race and gender (COMPAS/Law School/Adult datasets), i.e., $Z \in \{(C, M), (C, F), (O, M), (O, F), (C, O), (M, F)\}$. Experimental results on multiple sensitive attributes are presented in the Appendix.

\subsubsection{Other Methods on Learning Fair Classifiers} We compare and demonstrate the superiority of our proposed $F$-divergence framework with several existing regularization methods, pre-processing, and post-processing techniques as described next: (1) \textit{KDE based Regularization}: Cho et al.\cite{cho2020fair}: A kernel density estimation (KDE) based fair training, which directly estimates the conditional distribution $p(\hat{Y}|Z)$ (or $p(\hat{Y}|Z,Y)$) using KDE and uses it for fairness regularization. (2) \textit{Correlation based Regularization}: Mary et al. \cite{mary2019fairness}: A correlation coefficient based fair training, which use correlation coefficient $\rho(\hat{Y}, Z)~(or~\rho(\hat{Y}, Z~| Y))$  as the regularization term added in the loss function. (3) \textit{Covariance based Regularization}: Zafar et al. \cite{zafar2017fairness,zafar2017fairness1}: For demographic parity (DP), \cite{zafar2017fairness} uses the covariance between the sensitive attributes and the signed distance from the feature vectors to the decision boundary. For equalized odds (EO), \cite{zafar2017fairness1} uses the covariance between sensitive attributes and the signed distance between the feature vectors of misclassified data and the classifier decision boundary as the regularization term.  (4) \textit{FNR-FPR based Regularization}: Yahav et al. \cite{bechavod2017learning}, which proposes the difference of FNR across different groups and the difference of FPR across different groups as the regularization term. (5) \textit{Pre-processing Method (LFR)}: Zemel et al. \cite{zemel2013learning}, which proposes a pre-processing method to learn a fair representation of the data and then use the fair representation to train the model. (6) \textit{Post-processing Method (ROC) }: Hardt et al. \cite{hardt2016equality}, which proposes a post-processing method to learn a fair predictor from a discriminatory binary predictor by finding the optimal threshold between TPR and FPR for equalized odds (EO) constraints. For all comparison methods, we use authors' original codes or codes for these papers adapted from AI Fairness 360\footnote{https://aif360.mybluemix.net/}.

\begin{figure*}[t]
    \centering
    \includegraphics[scale=0.8]{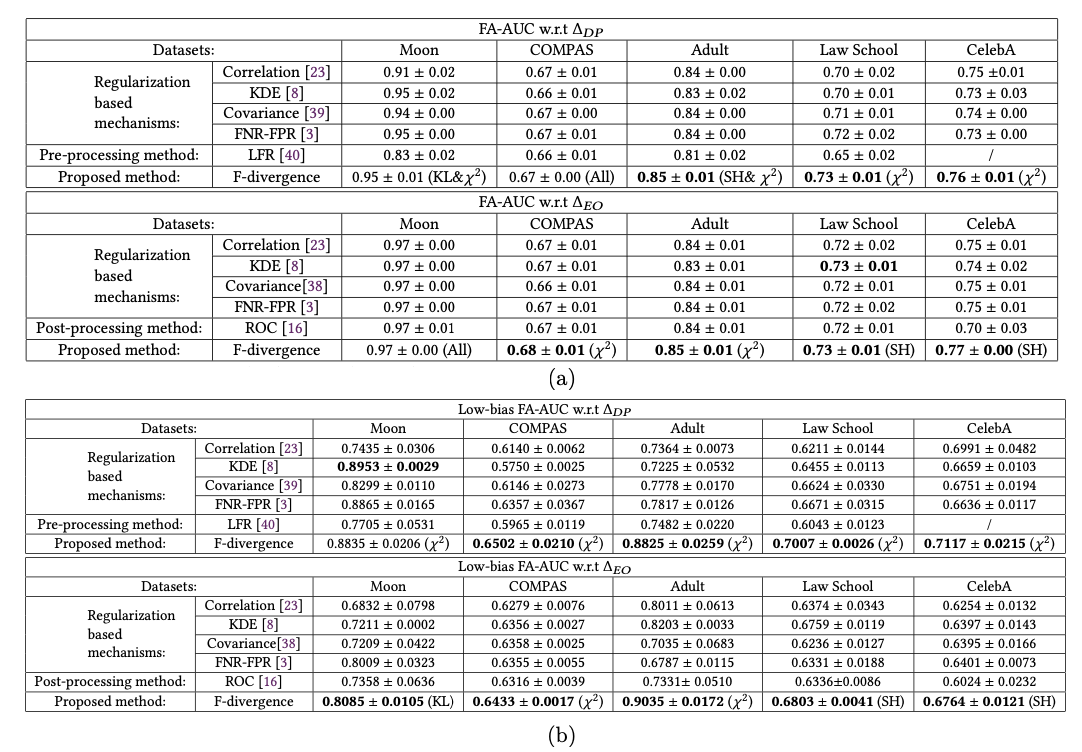}
    \caption{(a) FA-AUC w.r.t $\Delta_{DP}$ (top) / $\Delta_{EO}$ (bottom) and Accuracy; (b) FA-AUC in the \textbf{low-bias} region w.r.t $\Delta_{DP}$ (top) / $\Delta_{EO}$ (bottom) and Accuracy: We compare our best results in Table \ref{Table: f-div} with regularization based methods (correlation \cite{mary2019fairness}, KDE \cite{cho2020fair}, covariance \cite{zafar2017fairness,zafar2017fairness1}, FNR-FPR \cite{bechavod2017learning}), the pre-processing method (LFR \cite{zemel2013learning}) and post-processing method (ROC \cite{hardt2016equality}) method. We show the overall Area-under-the-curve (AUCs) (mean AUC $\pm$ standard deviation) for different techniques under both DP and EO fairness notions.  Bracket All represents that all KL, SH, $\chi^2$ divergence can achieve the value. Bracket SH represents that only SH divergence can achieve the value.}
    \label{fig:overall_table}
    \vspace{-15pt}
\end{figure*}
\begin{figure*}
    \centering
    \includegraphics[scale=0.45]{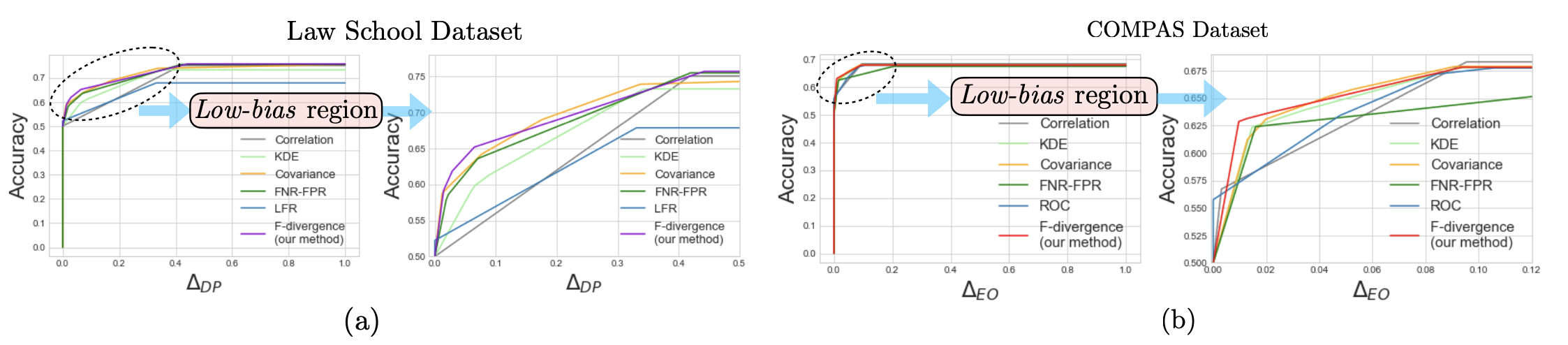}
    \caption{(a) Trade-off between fairness and accuracy (FA-ROC) on the Law School Admission dataset w.r.t \textbf{DP} constraints (left) and corresponding low-bias region FA-ROC (right); (b) Trade-off between fairness and accuracy (FA-ROC) on the COMPAS dataset w.r.t \textbf{EO} constraints (left) and corresponding low-bias region FA-ROC (right). As shown in the figure, for both DP and EO constraints, our method outperforms other regularization based methods. }
    \label{fig:law_dp}
    \vspace{-15pt}
\end{figure*}

\subsubsection{Model Architectures, Training Methodology and Hyperparameters}: For low/medium dimensional datasets, namely Moon, COMPAS, Adult and Law School, we consider neural network classifiers with three fully connected layers, where each hidden layer has $200$ nodes and is followed by an $SeLU$ \cite{klambauer2017self} non-linear layer. According to the experimental results, three fully connected layers are sufficient to obtain state-of-art accuracy; we report the original accuracy (without fairness constraints) in Table \ref{Table: hyperparameters}. We obtained the best accuracy with SeLU activation (in comparison to ReLU, Sigmoid activations). 
For the proposed F-divergence based fair training, the F-divergence estimator is modeled using two-layer neural networks, each with $5$ hidden nodes and a sigmoid non-linearity (see additional discussion on the choice of the model architecture in the Appendix).
For high dimensional image classification dataset (CelebA), we use ResNet 18 \cite{he2016deep} as the classifier for predicting attractive/un-attractive. For the F-divergence estimator, we use three linear layers, each with $10$ hidden nodes and a sigmoid non-linearity. Note that this architecture is sufficient for CelebA dataset since we only take outputs of the classifier and sensitive attributes as the input of the F-divergence estimator.

For all regularization based training methods, we use the following training loss function: $\mathcal{L}_{\text{Error}}+ \lambda \mathcal{R}_{\text{Fairness}}$, where $\mathcal{L}_{\text{Error}}$ denotes the binary cross-entropy loss (for the classification error) together with the fairness related regularization term $\mathcal{R}_{\text{Fairness}}$. The trade-off parameter $\lambda$ is used to adjust the proportion of classification loss and fairness constraints; varying $\lambda$ give us the set of points reflecting the trade-off between fairness and accuracy. To compare with pre / post - processing methods, we vary the classification threshold(s) of the corresponding algorithms to form a FA-ROC. By constructing the convex hull w.r.t these points, we form the FA-ROC as shown in Fig \ref{Fig:auc}. For each $\lambda$, we train over 5 runs with different random seeds. To obtain the Low-bias FA-ROC, we set $\lambda = 0$ (i.e., no fairness constraint), and find the corresponding bias of the learned classifier. This yields the low-bias threshold $\zeta$, and the low-bias FA-ROC is the convex hull of all points for which the bias is no more than $\zeta$ ($\zeta$ w.r.t DP / EO are shown in Table \ref{Table: hyperparameters}). For all methods, the models were trained using the Adam optimizer. For fair comparison, we use same training steps and the same classification model for other mechanisms. We report the accuracy and fair measurements on the test dataset. All the training hyperparameters are summarized in Table \ref{Table: hyperparameters}. 
For a fair comparison of different regularization techniques (including the $F$-divergence techniques and prior works) pre- and post-processing methodologies, we compare all of the techniques by measuring fairness using the well accepted notion of TV distance ($\Delta_{DP}$ as defined in \eqref{def:deltaDP} for DP or $\Delta_{EO}$ \eqref{def:deltaEO} for EO), which are the same metrics as defined in previous works.


\subsection{\ul{Comparison of  $F$-divergence Regularization with other approaches for learning fair classifiers}} \label{sec:exp_results}
In Fig \ref{fig:overall_table} (a), we report the overall FA-AUC for both DP and EO fairness notions for the six other existing methods and the proposed $F$-divergence approach. The numbers in the table are the mean AUC $\pm$ standard deviation over $5$ independent trials.  We notice that our proposed method achieves better trade-off than other compared methods for all the real world datasets. We also report the low-bias FA-AUC for all the datasets and the different regularization techniques in Fig \ref{fig:overall_table} (b). In contrast to the overall FA-AUC (where the performance improvement given by $F$-divergence is modest), when we zoom in the low-bias region, we see a significant performance improvement in the fairness-accuracy tradeoffs. As an example, for the Adult dataset with the DP fairness objective, our proposed method achieves \textbf{12.89\%} higher low-bias FA-AUC than other compared state-of-art mechanisms (the gain for EO fairness objective is \textbf{10.14\%}). Our method also performs well on high dimensional dataset (CelebA) where we achieve a gain of \textbf{2.67\% }/ \textbf{5.67\%} w.r.t the EO constraint for FA-AUC / Low-bias FA-AUC. We show the plots of FA-ROC in Fig \ref{fig:law_dp} on Law School and COMPAS datasets (similar figures for other datasets are provided in the Appendix). Another interesting observation is that the optimal choice of $F$-divergence regularization is dependent on both the dataset as well as the notion of fairness. For instance, Pearson $\chi^2$ divergence regularization yields the best trade-off than other $F$-divergence based methods w.r.t DP on all real world datasets. For the equalized odds (EO) notion,  SH divergence regularization shows better performance on  both COMPAS and Law School datasets while $\chi^2$ divergence achieves better results on Adult dataset.

\vspace{-5pt}
\section{Concluding Remarks}
\vspace{-3pt}
In this work, we introduced a general min-max $F$-divergence regularization framework for fair classifiers, which is readily adaptable for high-dimensional problems and compatible with gradient based optimization methods. In contrast to existing regularization methods such as correlation/covariance/TPR/FPR based methods, $F$-divergence notions for quantifying fairness/bias are stronger notions to capture dependence between classifier decisions and sensitive attributes and provide stronger fairness guarantees.

We also proposed the notion of Fairness-Accuracy ROC (FA-ROC) and a corresponding low-bias FA-ROC, which we argue as the correct approach to compare different mechanisms for learning fair classifiers and quantifing the tradeoff between fairness and accuracy. Through an extensive set of experiments on four real-world datasets (COMPAS, Adult Income, Law School Admissions and CelebA), we demonstrated the improvement in the fairness-accuracy tradeoffs compared to prior works on regularization as well as pre- and post-processing methods. 
\vspace{-10pt}
\bibliographystyle{unsrt}
\bibliography{main}

\appendices
\section{Implementation details}
\subsection{Additional Description of Datasets and Pre-processing}

In this Section, we present additional details regarding the synthetic (Moon) dataset, as well as some pre-processing steps we followed for the Law Admissions dataset. 

\textbf{Synthetic Moon Dataset}: For the synthetic Moon dataset, we use scikit-learn \cite{pedregosa2011scikit} -- $make\_moons$ function to generate two interleaving half circles with $P(Y=1|Z=1)\approx 0.35$ and $P(Y=1|Z=0)\approx 0.65$ while $P(Y=1)\approx0.5$, which follows the same procedure in KDE based mechanism \cite{cho2020fair}. 

\textbf{Law Admissions Dataset}: The law school dataset contains $ 21,791 $ numbers of tabular data from law students, where $19,360$ data have positive label and $2,431$ users contain negative label, which is an extremely unbalanced dataset. For this kind of extremely imbalanced dataset, a classifier will attend to predict more positive label leading to a skewed classification \cite{ yen2006under}. To overcome this situation, we balance the dataset by under-sampling the majority group w.r.t values of label (Y = 0 / Y = 1). Specifically, we randomly sample $2,431$ data points from majority group (positive label) and combine with the minority group to form a balanced law school admission dataset. 

\subsection{Choice of activation functions and Architecture for F-divergence estimator} Our binary classifier architecture consists of three fully connected layers, where each hidden layer has $200$ nodes and is followed by a Scaled exponential Linear Unit (SeLU) non linearity, defined as $\text{SeLU}(x) = \text{scale}\times \max(0,x) + \min (0, \alpha\times (e^{x}-1)$, where $\text{scale}= 1.0507$ and $\alpha = 1.6732$.  The last layer is a sigmoid non-linear layer which outputs the probabilities for the two classes. We compare other activation methods with SeLU activation function on COMPAS dataset without fairness constraints as shown in Table \ref{tab:acti_cls}. We find that SeLU activation function yields higher accuracy ($0.68$) while preserving fairness constraints (bias $\approx 0.12$).
\begin{table}[h]
    \centering
    \begin{tabular}{|c|c|c|}
        \hline
         & Unconstrained Accuracy& $\Delta_{DP}$  \\
        \hline    
         ReLU  & 0.6660 $\pm$ 0.0041 & 0.0991 $\pm$ 0.0038\\
        \hline
        Sigmoid &0.6769 $\pm$ 0.0030 & 0.1302 $\pm$ 0.0053 \\
        \hline
        SeLU & 0.6802 $\pm$ 0.0056 & 0.1257 $\pm$ 0.0083\\
        \hline
    \end{tabular}
    \caption{Comparison of activation functions used for the classifier on the COMPAS dataset. We compare the SeLU activation function with ReLU and Sigmoid activation function. We can observe that SeLU achieves the best trade-off than other two activation functions.}
    \label{tab:acti_cls}
\end{table}

\begin{table}[h]
    \centering
    \begin{tabular}{|c|c|c|}
        \hline
         & Constrained Accuracy& $\Delta_{DP}$  \\
        \hline    
         ReLU  & 0.6666 $\pm$ 0.0370   & 0.1077 $\pm$ 0.0426\\
        \hline
        SeLU &0.6008  $\pm$ 0.0119 & 0.0238 $\pm$ 0.0164\\
        \hline
        Sigmoid &0.6056 $\pm$ 0.0111 & 0.0176 $\pm$ 0.0245 \\
        \hline
    \end{tabular}
    \caption{Comparison of activation functions for the $F$-divergence estimator on the COMPAS dataset, where we enforce largest fairness constraints ($\lambda = 9$) to the classifier during the training process and output corresponding test accuracy and $\Delta_{DP}$.  We compare the Sigmoid activation function with ReLU and SeLU activation function. We can observe that Sigmoid gives the best trade-off than other two activation functions.}
    \label{tab:acti_div}
\end{table}

We next present the details regarding the architecture of the $F$-divergence estimator. We train the NN based $F$-divergence estimator with two hidden layers each followed by a sigmoid non-linearity. Similar as above, in Table \ref{tab:acti_div}, we compare sigmoid function with SeLU and ReLU activation functions on COMPAS dataset while imposing the largest fairness constraints ($\lambda=9$) with respect to demographic parity. We find that sigmoid function gives the best trade-off between fairness and accuracy.



\subsection{Other Approaches for Estimating $F$-divergence}
In this section, we discuss other possible alternatives to estimate $F$-divergence. We also perform a comprehensive study on the choice of $F$-divergence estimator in our experiments. 

\noindent \textbf{a) Conventional representation of $F$-divergence (CON)}
Another alternative is to directly estimate the probability distributions $\hat{\pi}_{i}$ and $\hat{\pi}_{j}$, and plug them to get an estimate of $F$-divergence term as follows:
 \begin{align}
     D_f(\pi_i\parallel \pi_j) \approx \sum_{u\in \{0,1\}}      \hat{\pi}_j(u)\times f\left(\frac{\hat{\pi}_i(u)}{\hat{\pi}_j(u)}\right)
 \end{align}
which can be used as the fairness regularization term. 

\noindent \textbf{b) Density ratio based estimation (DRE) of $F$-divergence}
Another approach is to estimate the density ratio (in our case, a probability ratio) $r(u)= \pi_i(u)/\pi_j (u)$ from the i.i.d. samples from the groups $i,j$ borrowing techniques from density ratio estimation (DRE) \cite{sugiyama2012density}. With an estimate $\hat{r}(u)$ for the probability ratio, one can then estimate $F$-divergence as follows: 
\begin{align}
D_f(\pi_i\parallel \pi_j) \hspace{-1pt}=\hspace{-1pt} E_{X\sim \pi_j} f(r(X)) \hspace{-1pt}\approx\hspace{-1pt} \frac{1}{M} \hspace{-2pt}\sum_{m=1}^{M}f(\hat{r}(x_{j}^{(m)}))
\end{align} 
where $\{x_{j}^{(m)}\}$ denote i.i.d. samples drawn from the distribution $\pi_{j}$. In Section \ref{sec:exp_results}, we perform a comprehensive comparison of various methods for estimating $F$-divergence and their associated impact on the tradeoffs between accuracy and fairness.  

\section{ADDITIONAL EXPERIMENTAL RESULTS}
In this Section, we first provide additional experimental results (FA-ROC and Low-bias FA-ROC) about the tradeoff between accuracy and fairness on Moon / Adult / CelebA datasets. Next, we compare our proposed method with other regularization based methods (FNR-FPR \cite{bechavod2017learning}, Correlation \cite{mary2019fairness} and Covariance \cite{zafar2017fairness, zafar2017fairness1}) w.r.t the groupwise AUC for DP and EO constraints.
\begin{figure*}
    \centering
    \includegraphics[scale=0.45]{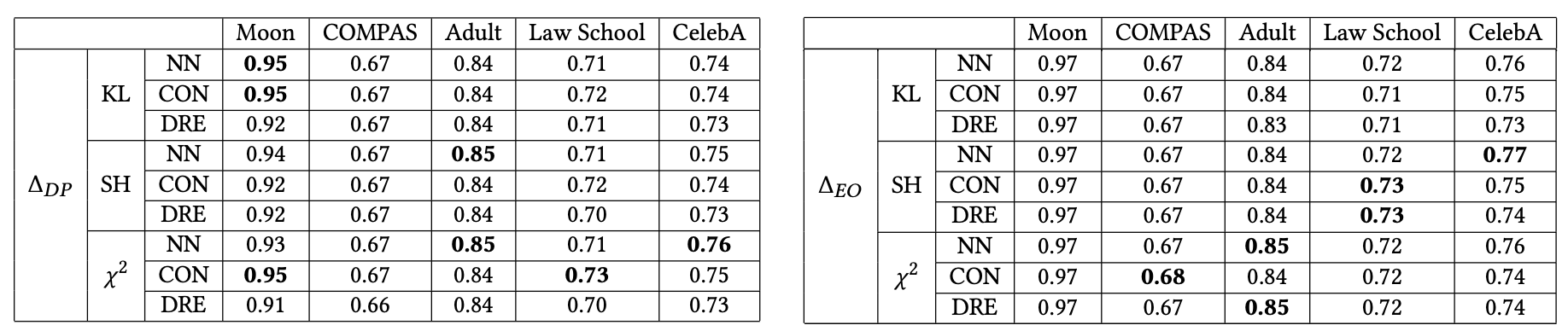}
    \caption{Evaluation of the $F$-divergence based framework (via FA-AUC) w.r.t various $F$-divergence Estimators (NN, CON, DRE) and different $F$-divergences (KL, SH, $\chi^2$) for the DP (left) / EO (right) constraint. We can observe that NN based $F$-divergence estimator consistently leads to higher area under the FA-ROC curve.}
    \vspace{-15pt}
    \label{Table: f-div}
\end{figure*}
\begin{figure*}[t]
\centering
\includegraphics[scale=0.53]{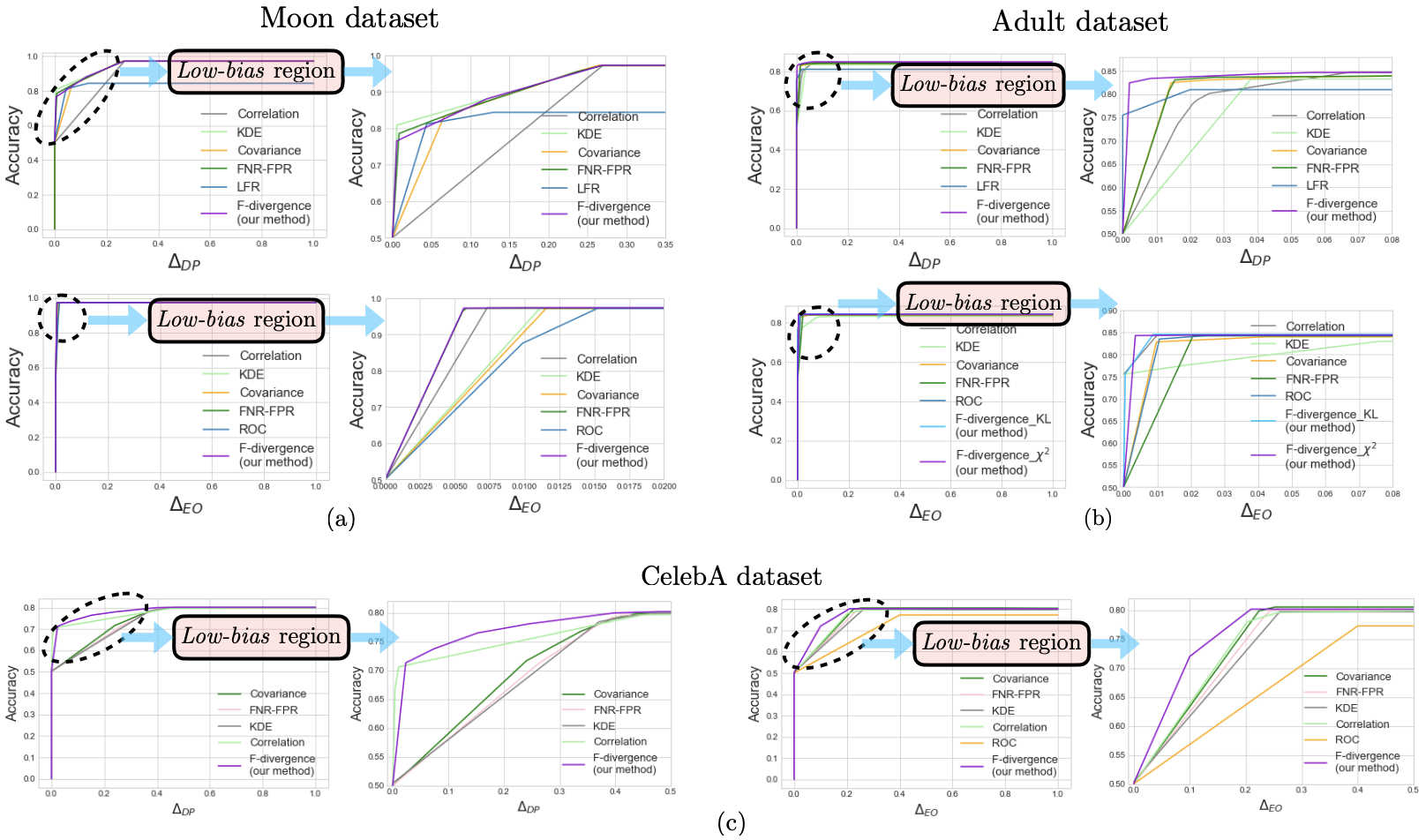}
\caption{Trade-off between fairness-accuracy (FA-ROC) for DP and EO on the Moon dataset (a) / Adult dataset (b) / CelebA dataset (c). We also show the low-bias region (Low-bias FA-ROC) in these figures. For both DP and EO constraints, we observe that our proposed method consistently outperforms other regularization based methods (Correlation \cite{mary2019fairness}, KDE \cite{cho2020fair}, Covariance \cite{zafar2017fairness,zafar2017fairness1}, FNR-FPR \cite{bechavod2017learning}), the pre-processing method (LFR \cite{zemel2013learning}) and post-processing method (ROC \cite{hardt2016equality}).}
\label{dp-moon-divergence}
\end{figure*}

\subsection{\underline{Evaluation of different $F$-divergence Estimators}} To validate the $F$-divergence regularization framework, we first conduct experiments comparing different $F$-divergence estimation techniques. Specifically, we test the three methods discussed in Section \ref{Sec:f-div} (variational representation (NN), convention representation  (CON) and DRE method (DRE)) for three types of $F$-divergence measures:  KL divergence (KL), Squared Hellinger distance (SH) and Pearson $\chi^2$ divergence ($\chi^2$).  In Fig \ref{Table: f-div}, we show the performance of trade-off between fairness and accuracy w.r.t. different $F$-divergence measures under different estimation techniques for the five datasets. We first observe that the $F$-divergence estimator leads to consistent results for all five datasets. We also observe that NN and CON methods for $F$-divergence estimation can be slightly better than the DRE method. Therefore, we will primarily use NN based $F$-divergence estimator in the following experiments unless specified otherwise. Our second interesting observation is that the optimal choice of the $F$-divergence measures (KL vs. SH vs. $\chi^{2}$) is dependent on the dataset. For instance, for demographic parity, $\chi^2$-divergence regularization outperforms other two $F$-divergences on the Law School dataset.

\subsection{\ul{Fairness-accuracy tradeoffs on Moon, Adult and CelebA datasets}}

In this Section, we show the plots for Moon, Adult and CelebA datasets, which are shown in Fig. \ref{dp-moon-divergence}. For Moon dataset, our proposed method with KL divergence is close to the state-of-the-art compared methods w.r.t DP fairness objectives. For EO, three of our proposed technique has better trade-off between fairness and accuracy than others. For the Adult dataset, we have a similar observation that our proposed methods outperform other compared mechanisms (the gain for DP is \textbf{12.89\%}; the gain for EO is \textbf{10.14\%}). We can observe similar trends on CelebA dataset, where our proposed method consistently achieves better results than others (achieves a gain of \textbf{2.67\% }/ \textbf{5.67\%} w.r.t the EO constraint for FA-AUC / Low-bias FA-AUC . ).

\begin{figure*}[t]
\centering
\includegraphics[scale=0.2]{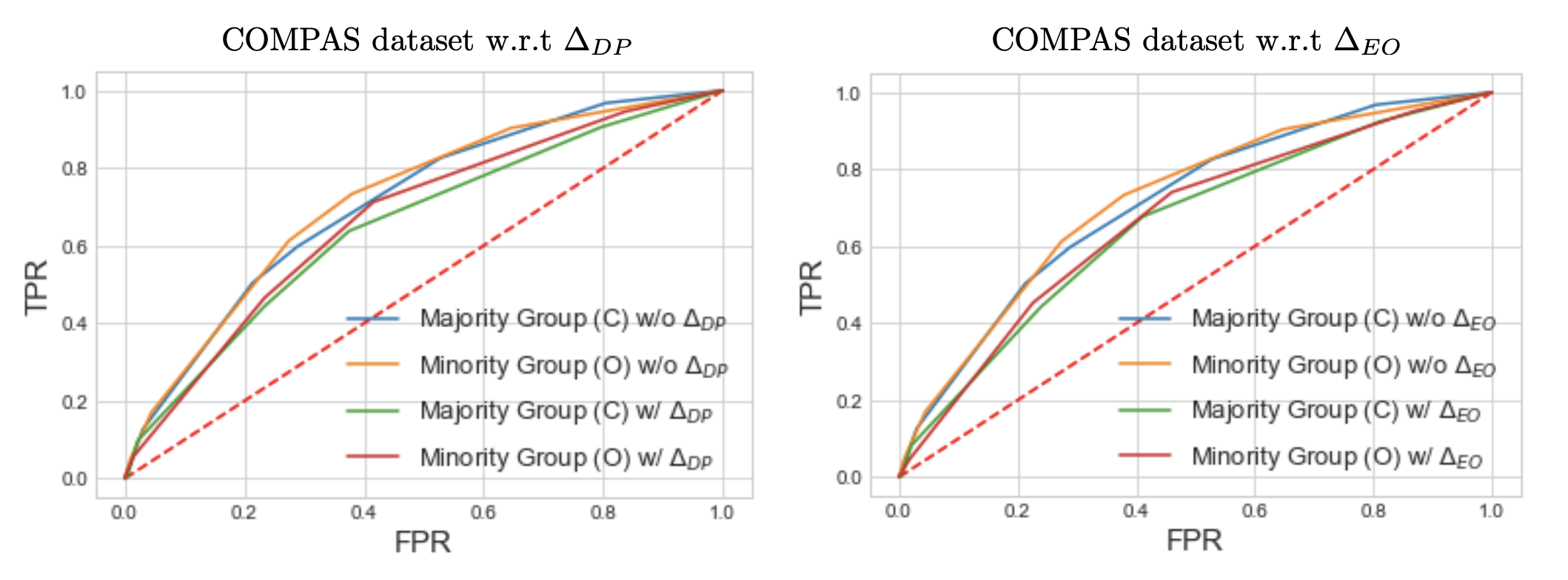}
\caption{Groupwise ROC w.r.t. TPR and FPR on the COMPAS dataset for DP constraints (left) and for EO constraints (right). For COMPAS dataset, the majority group is "Caucasian" and the minority group corresponds to "Other Race". }
\label{Fig: groupwise}
\end{figure*}

\begin{figure*}[h]
\centering
\includegraphics[scale=0.50]{supp_table.png}
\caption{Comparison of groupwise AUC w.r.t TPR and FPR with other regularization based methods (FNR-FPR \cite{bechavod2017learning}, Correlation \cite{mary2019fairness} and Covariance \cite{zafar2017fairness,zafar2017fairness1} based method ) for each population group w.r.t DP (a) / EO (b). As shown above, $F$-divergence framework has mild reductions on each group while ensuring fairness constraints for all three real world datasets.}

\label{fig:supp_table}.
\end{figure*}
\subsection{\underline{Impact of Fairness Constraints on Groupwise TPR/FPR}} We now study the impact of fairness constraints on the groupwise  ROC and AUC. Specifically, for each population sub-group (e.g., for COMPAS/Adult/Law-School datasets, the population was divided into "C" (Caucasian) and "O" (other race)),   we calculate the true positive rate (TPR) and the false positive rate (FPR) of the fair classifier and show the resulting ROC in Fig \ref{Fig: groupwise}. 
To explore the impact of fairness constraints, we compare the ROC of a classifier without fairness constraints ($\lambda=0$) to the classifier trained with largest fairness constraint (i.e., highest regularization penalty, e.g., $\lambda = 9$ for COMPAS/Adult/Law-School datasets) in our experiments. We then calculate the area under ROC to get the quantitative results, which are shown in Table \ref{Tab:groupwise}. We observe that for both minority and majority population groups, the $F$-divergence framework only leads to a mild reduction in the AUC with fairness constraints, when compared with the AUC without fairness constraints. 
\begin{table*}[t]
\begin{center}
\scalebox{1}{
\begin{tabular}{|c|c|c|c|c|c|c|c|c|}
\hline
 & \multicolumn{2}{|c|}{Majority Group} & \multicolumn{2}{|c|}{Minority Group}& \multicolumn{2}{|c|}{Majority Group} & \multicolumn{2}{|c|}{Minority Group} \\
 \cline{2-9}
 &  w/o DP & w/ DP &  w/o DP & w/ DP &  w/o EO & w/ EO 
&  w/o EO & w/ EO \\
\hline
COMPAS & 0.72& 0.67  & 0.71  & 0.65 &0.72 & 0.66  & 0.71 & 0.66 \\
\hline
Adult  & 0.90 & 0.88 & 0.88  & 0.85  &0.90 & 0.86 & 0.88  & 0.85 \\
\hline
Law School & 0.81& 0.80 & 0.81  & 0.80 & 0.81 & 0.76 & 0.81  & 0.75 \\
\hline
\end{tabular}}
\caption{Groupwise AUC w.r.t DP (left) / EO (right) for each population group on real world datasets with (or without fairness constraints). As shown in the Table, groupwise AUC perserves high utility while adding fair constraints (DP or EO).  }
\label{Tab:groupwise}
\end{center}
\end{table*}
Next, We study the impact of our proposed methods on the groupwise ROC and AUC compared with other mechanisms (FNR-FPR \cite{bechavod2017learning}, Correlation \cite{mary2019fairness} and Corvariance \cite{zafar2017fairness, zafar2017fairness1} based method). For each population group (C or O), we construct the ROC with respect to TPR and FPR of the classifier (with or without fairness constraints) and then calculate the AUC in Fig \ref{fig:supp_table}. $F$-divergence based framework consistently leads to a slightly reduction in the AUC with fairness objectives compared with the classifier without fairness constraints. As an example, for Law School dataset, the reduction of enforcing DP fairness constraints for each population group is only \textbf{0.01}. Similar as Law school dataset, the maximum reduction among each group in Adult dataset is \textbf{0.03} for DP and \textbf{0.04} for EO.

\begin{table*}[t]
\begin{center}
\scalebox{0.9}{
\begin{tabular}{|c|c|c|c|}
\hline
\multicolumn{4}{|c|}{FA-AUC}\\
\hline
  & COMPAS  & Adult  & Law School \\
\hline
 Correlation \cite{mary2019fairness}&  0.59  $\pm$ 0.04 & 0.83 $\pm$ 0.03 & 0.63 $\pm$ 0.03\\
\hline 
KDE\cite{cho2020fair} & 0.67 $\pm$ 0.00 &  0.81 $\pm$ 0.02 & 0.72 $\pm$ 0.01 \\
\hline
 Covariance\cite{zafar2017fairness} &0.67  $\pm$ 0.01 & 0.83 $\pm$ 0.01 &  0.72 $\pm$ 0.01 \\
\hline
 FNR-FPR \cite{bechavod2017learning}& 0.67 $\pm$ 0.00 & 0.80 $\pm$ 0.02 & 0.72 $\pm$ 0.03\\
 \hline
Proposed method ($\chi^2$) & 0.67 $\pm$ 0.00 & \textbf{0.84 $\pm$ 0.01} &\textbf{0.73 $\pm$ 0.01}\\
\hline
\hline
\multicolumn{4}{|c|}{Low-bias FA-AUC}\\
\hline
   & COMPAS  & Adult  & Law School \\
\hline
 Correlation \cite{mary2019fairness} & 0.5961 $\pm$ 0.0299 & 0.6688 $\pm$ 0.0349 & 0.6182 $\pm$ 0.0396 \\
\hline 
KDE\cite{cho2020fair} & 0.6120 $\pm$ 0.0449 &  0.7652 $\pm$ 0.0380 & 0.6730 $\pm$ 0.0168\\
\hline
Covariance\cite{zafar2017fairness} & 0.6192 $\pm$ 0.0410 & \textbf{0.8092 $\pm$ 0.0759} &  0.6863 $\pm$ 0.0374 \\
\hline
 FNR-FPR \cite{bechavod2017learning} & 0.5974 $\pm$ 0.0073 & 0.7071 $\pm$ 0.0048 & 0.6863 $\pm$ 0.0192 \\
 \hline
Proposed method ($\chi^2$) & \textbf{0.6328  $\pm$ 0.0086} & 0.8067 $\pm$ 0.0226 &\textbf{0.7000 $\pm$ 0.0170}\\
\hline
\end{tabular}}
\caption{FA-AUC and Low-bias FA-AUC w.r.t multiple sensitive attributes (gender and race). We can observe that $F$-divergence regularization consistently yields the better AUC in most scenarios.}
\label{Table: multi sensi}
\end{center}
\end{table*}

\subsection{\underline{Results for Multiple Sensitive Attributes}.}
We next present results for the case when more than one attributes can be sensitive (specifically, both race and gender) for COMPAS, Adult and Law School datasets, which leads to four population groups. According to Eq \eqref{fair_regu}, we calculate the sum of $\Delta_{DP}$ on each pair of groups and then add them in the classification loss function. For proposed method, we use $\chi^2$ divergence based NN model for COMPAS and Adult datasets. For Law School dataset, we apply $\chi^2$ divergence based regularization which is estimated using the conventional (CON) method. As shown in Table \ref{Table: multi sensi}, our proposed method achieves better performance than other existing regularization techniques, i.e., for the COMPAS dataset with DP fairness constraints, our proposed method achieves \textbf{2.20\%} higher low-bias FA-AUC than other compared state-of-art techniques (the gain for Law School dataset is \textbf{2.00\%}).

\end{document}